\def\eqref#1{equation~\ref{#1}}
\def\Eqref#1{Equation~\ref{#1}}
\def\1{\bm{1}}
\DeclareMathAlphabet{\mathsfit}{\encodingdefault}{\sfdefault}{m}{sl}
\SetMathAlphabet{\mathsfit}{bold}{\encodingdefault}{\sfdefault}{bx}{n}
\def\gD{{\mathcal{D}}}
\def\gN{{\mathcal{N}}}
\newcommand{\EE}{\mathbb{E}}
\newcommand{\cN}{\mathcal{N}}
\newcommand{\cM}{\mathcal{M}}
\newcommand{\abs}[1]{\left| #1 \right|}
\newcommand{\bracket}[1]{\left(#1\right)}
\newcommand{\mbracket}[1]{\left[#1\right]}
\newcommand{\sets}[1]{\left\{ #1 \right\}}
\newcommand{\cfg}{\rm{CFG}}
\newcommand{\dog}{\rm{DoG}}
\newcommand{\tgt}{\rm{tgt}}
\newif\ifsup\supfalse
\newtheorem{theorem}{Theorem}
\newtheorem{proposition}{Proposition}
\newcommand{\ssymbol}[1]{^{\@fnsymbol{#1}}}
\title{Domain Guidance: A Simple Transfer Approach for a Pre-trained Diffusion Model}
\author{%
  Jincheng Zhong, Xiangcheng Zhang, Jianmin Wang,  Mingsheng Long\textsuperscript{\Letter} \\
  School of Software, BNRist, Tsinghua University, China\\
  {\small 
  \texttt{\{zjc22,xc-zhang21\}@mails.tsinghua.edu.cn}},\\  {\small \texttt{\{jimwan,mingsheng\}@tsinghua.edu.cn}}
}
\begin{document}

\maketitle

\begin{abstract}
  Recent advancements in diffusion models have revolutionized generative modeling. However, the impressive and vivid outputs they produce often come at the cost of significant model scaling and increased computational demands. Consequently, building personalized diffusion models based on off-the-shelf models has emerged as an appealing alternative. In this paper, we introduce a novel perspective on conditional generation for transferring a pre-trained model. From this viewpoint, we propose \textit{Domain Guidance}, a straightforward transfer approach that leverages pre-trained knowledge to guide the sampling process toward the target domain. Domain Guidance shares a formulation similar to advanced classifier-free guidance, facilitating better domain alignment and higher-quality generations. We provide both empirical and theoretical analyses of the mechanisms behind Domain Guidance. Our experimental results demonstrate its substantial effectiveness across various transfer benchmarks, achieving over a 19.6\% improvement in FID and a 23.4\% improvement in FD$_\text{DINOv2}$ compared to standard fine-tuning. Notably, existing fine-tuned models can seamlessly integrate Domain Guidance to leverage these benefits, without additional training.   Code is available at this repository: \url{https://github.com/thuml/DomainGuidance.}

\end{abstract}

\section{Introduction}

Diffusion models have significantly advanced the state of the art across various generative tasks, such as image synthesis \citep{DDPMho2020denoising}, video generation \citep{VDM-ho2022video}, and cross-modal generation \citep{imagen-saharia2022photorealistic, SD-rombach2022high}.  Concurrently, advancements in guidance techniques \citep{diff-beat-dhariwal2021diffusion, cfg-ho2022classifier} have significantly enhanced mode consistency and generation quality, becoming indispensable components of contemporary diffusion models \citep{SD3-esser2024scaling, DiT-peebles2023scalable}.
However, generating high-quality samples frequently requires substantial computational resources to scale foundational diffusion models. In practical settings, transfer learning, especially fine-tuning, proves vital for personalized generative scenarios.

Recent research has yielded promising outcomes in fine-tuning scaled pre-trained models. Despite diverse motivations, these efforts converge on a common objective: efficient fine-tuning with minimal parameter adjustment, a group of methods termed parameter-efficient transfer learning (PEFT) \citep{adapterhoulsby2019parameter, bitfit-zaken2021bitfit, difffit-xie2023difffit}. Nevertheless, PEFT introduces significant optimization challenges, including the necessity for considerably higher learning rates—often an order of magnitude greater than typical—which may precipitate spikes in loss. An effective transfer strategy that capitalizes on the intrinsic properties of diffusion models remains largely unexplored.

In this paper, we introduce a novel perspective on conditional generation for fine-tuning. We conceptualize the transfer of a pre-trained model to a downstream domain as conditioning the sampling process on the target domain, relative to the pre-trained data distribution. From this viewpoint, we incorporate guidance principles \citep{diff-beat-dhariwal2021diffusion, cfg-ho2022classifier} and introduce \textit{Domain Guidance} (DoG) as a general transfer method to enhance model transfer. DoG involves fine-tuning the pre-trained model specifically for the new domain to create a domain conditional branch, while simultaneously maintaining the original model as an unconditional guiding counterpart. At each sampling step, the domain conditional and the pre-trained guiding model are executed once each, with the fine-tuned results being further extrapolated from the pre-trained base using a DoG factor hyperparameter. This method not only offers a general guidance strategy for transferring pre-trained models but also seamlessly integrates models fine-tuned in the classifier-free guidance (CFG) style by simply excluding the unconditional component, without necessitating additional training. This streamlined approach significantly improves domain alignment and generation quality.

To further explore the mechanism behind DoG, we provide both empirical and theoretical analyses. Firstly, we employ a mixture of Gaussian synthetic examples and perform a theoretical analysis of DoG behaviors, which reveal that DoG effectively leverages the pre-trained domain knowledge, improving domain alignment. In contrast, standard CFG with a fine-tuned model often suffers from catastrophic forgetting, eroding valuable pre-trained knowledge. Furthermore, we observe that limited training resources and a low-data regime typically challenge the unconditional guiding component's ability to fit the target domain, leading to out-of-distribution (OOD) samples and exacerbating sampling errors. DoG effectively mitigates these issues, reducing overall errors and enhancing the generation quality.

Experimentally, we evaluate DoG across seven well-established transfer learning benchmarks, providing quantitative and qualitative evidence to substantiate its efficacy. Our comprehensive ablation study further underscores its superiority in the transfer of pre-trained diffusion models.

Overall, our contributions can be summarized as follows:
\begin{itemize}
    \item We introduce a novel conditional generation perspective for transferring pre-trained models and present \textit{Domain Guidance} (DoG) as a streamlined, effective transfer learning approach that leverages the principles of CFG to enhance domain alignment and generation quality. 
    \item We delve into the mechanisms behind DoG's improvements, offering both empirical and theoretical evidence that underscores how DoG enhances domain alignment by harnessing pre-trained knowledge. We also highlight how standard CFG approaches with fine-tuned guiding models often face challenges from poor fitness, which can exacerbate guidance performance issues due to increased variance in OOD samples. Conversely, DoG effectively addresses these concerns. 
    \item We validate DoG across various benchmarks, confirming its effectiveness. Our quantitative assessments show marked improvements in generated image distributions, as measured by FID \citep{fid-heusel2017gans} and FD$_\text{DINOv2}$ \citep{stein2024exposingFDdinov2} metrics, and reveal that existing fine-tuned models can benefit from DoG without additional training. 
\end{itemize}

\section{Related Work}

\paragraph{Diffusion models.} Diffusion-based generative models \citep{DDPMho2020denoising, score-song2019generative, SDE-song2020score, EDM-karras2022elucidating} transform pure noise into high-quality samples through an iterative denoising process. This gradual transformation stabilizes the training process but also imposes substantial computational demands for sampling. Recent improvements in diffusion models have primarily addressed noise schedules \citep{iddpm-nichol2021improved, EDM-karras2022elucidating}, training objectives \citep{v-salimans2021progressive, EDM-karras2022elucidating}, efficient sampling techniques \citep{DDIM-song2020denoising}, controllable generation \citep{cfg-ho2022classifier, controlnet-zhang2023adding, diff-beat-dhariwal2021diffusion}, and model architectures \citep{DiT-peebles2023scalable}. Current state-of-the-art models benefit significantly from scaling up training parameters and datasets, necessitating considerable resources. In this work, we examine efficient transfer learning strategies for pre-trained diffusion models.

\paragraph{Guidance techniques for diffusion models.}
The notable successes of recent applications \citep{diff-beat-dhariwal2021diffusion,svd-blattmann2023stable,SD3-esser2024scaling} using diffusion models can largely be attributed to advances in guidance techniques, which ensure that model outputs align closely with human preferences.
Prior studies have developed various methods for effectively modeling conditional control information. \citet{diff-beat-dhariwal2021diffusion} introduced \textit{classifier guidance}, which enhances conditional generation through an additional trained classifier. Subsequently, \textit{classifier-free guidance} (CFG), proposed by \citet{cfg-ho2022classifier}, has emerged as the \textit{de facto} standard in modern diffusion models due to its robust performance. Recently, a line of works \cite{kynkaanniemi2024applyinginterval, karras2024autoguidance} investigates how to utilize guidance techniques to enhance the generation performance in Elucidating Diffusion Models (EDM) \cite{EDM-karras2022elucidating}.
Our work identifies challenges in the underperformance of fine-tuned diffusion models within standard CFG frameworks and investigates novel guidance strategies for adaptation.

\paragraph{Transfer learning.} Transfer learning seeks to leverage existing knowledge to facilitate learning in a new domain \citep{yangqiangpan2009survey}, typically through fine-tuning a pre-trained model \citep{yosinski2014transferable}. Previous research has aimed to refine standard fine-tuning techniques to address issues such as catastrophic forgetting \citep{zhongdiffusion,lwf-li2017learning}, negative transfer \citep{BSSchen2019catastrophic}, and overfitting \citep{MaxEntdubey2018maximum}. With the recent significant expansion in model scales, the focus has shifted to a research area known as parameter-efficient transfer learning \citep{adapterhoulsby2019parameter,bitfit-zaken2021bitfit}, which aims to adjust as few parameters as possible to minimize memory usage and computational demands on gradient calculations. In this work, we reframe transfer learning in the context of domain conditional generation and propose a streamlined and effective approach.

\section{Method}
\subsection{Background}

\paragraph{Diffusion formulation.} Before demonstrating our method, we briefly revisit the basic concepts in diffusion models. Gaussian diffusion models are defined by a forward process that gradually adds noise to original samples: $\mathbf{x}_t = \sqrt{\alpha_t} \mathbf{x}_0 + \sqrt{1-\alpha_t} \bm{\epsilon}$, where $\mathbf{x}_0 \sim \mathcal{X}$ denotes the original samples, $\bm{\epsilon} \sim \mathcal{N}(\mathbf{0}, \mathbf{I})$ denotes the noise signal, and constants $\alpha_t$ are hyperparameters that determine the level of noise infusion.

The training of diffusion models typically involves learning a parameterized function $f$ that predicts the noise added to a sample, formalized by the loss function:
\begin{equation}
\label{eq:diff-loss}
        L({\bm{\theta}})=\mathbb{E}_{t, \mathbf{x}_0, \bm{\epsilon}}\left[w_t\left\|\bm{\bm{\epsilon}} - f_{\bm{\theta}}\left(\sqrt{\alpha_t}\mathbf{x}_0+\sqrt{1-\alpha_t}\bm{\bm{\epsilon}},t\right)\right\|^2\right],
\end{equation} 
where $w_t=1$ is set by default, following the simple setting used in prior studies \citep{DDPMho2020denoising}. Sampling from diffusion models $f_{\bm{\theta}}$ then follows a Markov chain, iteratively denoising from $\mathbf{x}_T\sim\mathcal{N}(\mathbf{0},\mathbf{I})$ back to $\mathbf{x}_0$. 

\paragraph{Classifier-free guidance.} In various complex real-world scenarios, aligning the outputs of diffusion models with human preferences is crucial. Classifier-free guidance (CFG) has become an essential tool for enhancing the outputs of practically all image-generating diffusion models \citep{cfg-ho2022classifier, SD3-esser2024scaling, karras2024analyzing}. CFG is formalized as follows:
\begin{equation}
\label{eq-std-cfg}
        \nabla_{{\mathbf{x}_t}}\log p_w^{\cfg}({{\mathbf{x}_t}}|c) = \nabla_{{\mathbf{x}_t}} \log p({\mathbf{x}_t}|c) + (w-1)\bracket{\nabla_{\mathbf{x}_t} \log p({\mathbf{x}_t}|c)-\nabla_{\mathbf{x}_t} \log p({\mathbf{x}_t})}.
\end{equation}
Here, $w$ is the guidance factor, typically set greater than 1, modulating the influence between the outputs of the conditional and unconditional models to achieve the desired guiding effect.

Practically, CFG is implemented by constructing both a conditional model $\nabla_{{\mathbf{x}_t}} \log p_{\bm{\theta}}({{\mathbf{x}_t}}|c)$ and an unconditional guiding model $\nabla_{{\mathbf{x}_t}} \log p_{\bm{\theta}}({{\mathbf{x}_t}})$ within a shared-weight network $f_{\bm{\theta}}$. The combined training loss is described as:
\begin{equation}
\label{eq:diff-loss-cfg}
        L({\bm{\theta}})=\mathbb{E}_{t, \mathbf{x}_0, \bm{\epsilon},c}\left[\left\|\bm{\bm{\epsilon}} - f_{\bm{\theta}}\left(\sqrt{\alpha_t}\mathbf{x}_0+\sqrt{1-\alpha_t}\bm{\bm{\epsilon}},t,\text{Dropout}_{\delta}(c)\right)\right\|^2\right],
\end{equation}

where the dropout ratio $\delta$ is typically set at 10\%, as endorsed by recent studies \citep{DiT-peebles2023scalable, SD3-esser2024scaling}.

\subsection{Domain Guidance}

Fine-tuning existing checkpoints for target domains has become a prevalent practice in transfer learning. In this section, we introduce a novel perspective on transferring a generative model through the lens of conditional generation, bridging the commonly used classifier-free guidance (CFG) into transfer learning to develop our method, named \textit{Domain Guidance} (DoG).

\begin{figure}[t]
  \centering
  \includegraphics[width=1.0\textwidth]{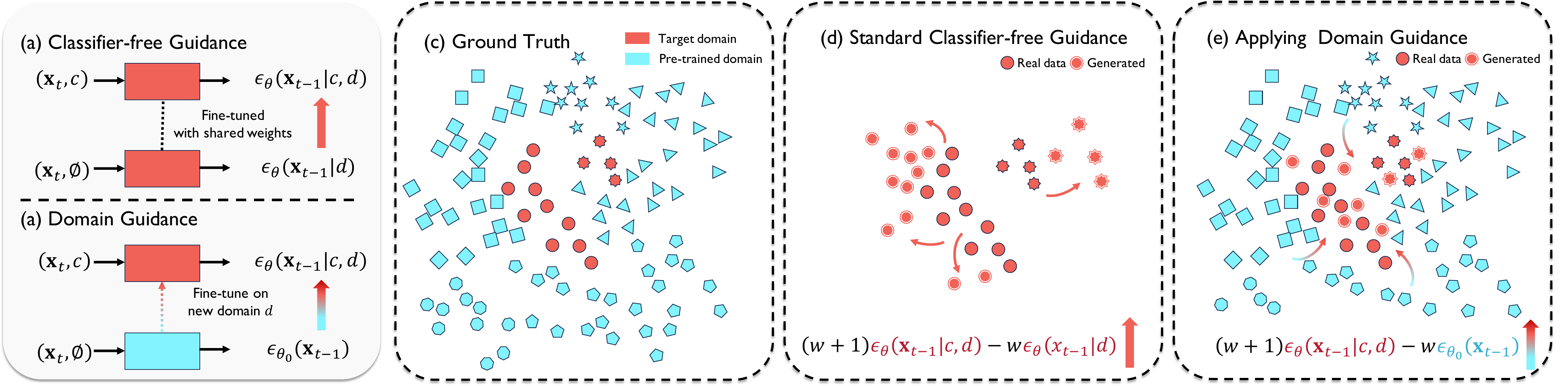}
\caption{Conceptual comparisons between \textit{Domain Guidance} and standard classifier-free guidance. (a) shows standard CFG modeling both conditional density and unconditional guiding signals for the target domain simultaneously. (b) illustrates the proposed \textit{Domain Guidance}, which focuses on building conditional density and guides the sampling process from the pre-trained model to the target domain. (c) to (e) depict conceptual examples of the mechanism differences between CFG and DoG, highlighting how DoG leverages pre-trained knowledge to enhance generation for the target domain.} 
\label{fig:overview-relation-dog-cfg}
\end{figure}

\paragraph{The domain conditional generation perspective of transfer.} 
The primary goal in training a generative model on a target domain is to accurately capture its distribution. When fine-tuning a pre-trained generative model, we start with models that have learned the distribution of the pre-trained data. Ideally, it should leverage the distribution knowledge from the pre-trained data, effectively modeling the conditional distribution given this pre-trained context. However, the relationship $p(\mathbf{x}^{\tgt}|\gD^{\text{src}})$ 
is often compromised due to catastrophic forgetting, as the model loses access to the pre-trained data. Without adequate regularization from these pre-trained datasets, the model tends to converge solely to the marginal target distribution $p(\mathbf{x}^{\tgt}|\gD^{{\tgt}})$ through empirical risk minimization with \Eqref{eq:diff-loss-cfg}. This convergence results in the loss of valuable pre-trained domain knowledge, limiting the standard fine-tuning effectiveness in modeling the relationship of the domain conditional generation.

\paragraph{Guiding generations to the target domain via domain guidance.} Building on the domain conditional generation viewpoint, we introduce \textit{Domain Guidance} (DoG), which utilizes the original pre-trained model as an unconditional guiding model. This approach leverages pre-trained knowledge to direct the generative process towards the target domain, as outlined below:
\begin{equation}
\label{DoG}
        \bm{\epsilon}^{\dog}(\mathbf{x}|\gD^{\tgt}) =  \epsilon_{\bm{\theta}}(\mathbf{x}|\gD^{\tgt}) + (w^{\dog}-1)\bracket{\bm{\epsilon}_{\bm{\theta}}(\mathbf{x}|\gD^{\tgt})- \bm{\epsilon}_{{\bm{\theta}}_0}(\mathbf{x})},
\end{equation} 
where $\bm{\epsilon}_{\bm{\theta}}(\mathbf{x}|\gD^{\tgt})$ represents the output of the fine-tuned model specific to the target domain, and $\bm{\epsilon}_{{\bm{\theta}}_0}(\mathbf{x})$ denotes the output from the original pre-trained model, with ${\bm{\theta}}_0$ marking the weights prior to fine-tuning. The guidance factor, $w^{\dog}$, adjusts the influence of this guidance, where values greater than 1 typically emphasize traits of the target domain. Specifically, DoG reduces to the standard fine-tuned model output, $\epsilon_{\bm{\theta}}(\mathbf{x}|\gD^{\tgt})$, when $w=1$ and to the pre-trained model $\bm{\epsilon}_{{\bm{\theta}}_0}(\mathbf{x})$, when $w=0$. 

DoG serves as a versatile mechanism for the transfer of a pre-trained model and can be directly expanded to a variety of transfer scenarios involving both conditional signals $c$ and domains $D^{\tgt}$, enhancing its applicability. The formulation of DoG in these contexts is given by: 
\begin{equation}
\label{DoG-conditional}
        \bm{\epsilon}^{\dog} (\mathbf{x}|c,\gD^{\tgt}) =  \epsilon_{\bm{\theta}}(\mathbf{x}|c,\gD^{\tgt}) + (w^{\dog}-1)\bracket{\bm{\epsilon}_{\bm{\theta}}(\mathbf{x}|c,\gD^{\tgt})- \bm{\epsilon}_{{\bm{\theta}}_0}(\mathbf{x})},
\end{equation} 
For practical implementation, inputs are concatenated with the conditional signal $c$, while the domain signal $D^{\tgt}$ is implicitly integrated during fine-tuning on the target domain. The dropout ratio $\delta$ in the standard CFG setup (\Eqref{eq-std-cfg}) can be set to 0, eliminating the need to fine-tune the unconditional guiding model and thereby simplifying the fitting process. Moreover, models that have been previously fine-tuned using the CFG approach can seamlessly transition to benefit from DoG by merely substituting the unconditional guiding component. This adjustment allows existing models to leverage pre-trained knowledge more effectively, enhancing their adaptability and performance in new domain settings.

\paragraph{Comparision with CFG.}
We conceptually compare DoG with CFG in Figure \ref{fig:overview-relation-dog-cfg}, illustrated within a general transfer scenario involving conditional signals $c$ and domains $d$. Both approaches exhibit distinct behaviors in transfer settings. The jointly fine-tuning with \Eqref{eq:diff-loss-cfg} and performing CFG on the fine-tuned model is the standard practice (e.g., \citep{SD3-esser2024scaling,difffit-xie2023difffit,controlnet-zhang2023adding}, as shown in Figure \ref{fig:overview-relation-dog-cfg}(a) and (d)). This method uses the target data to construct a weight-sharing network that models both conditional and unconditional densities simultaneously. Applying CFG through two forward passes can steer generation towards low-temperature conditional areas, thus enhancing generation quality and improving conditional consistency.
However, CFG fails to leverage pre-trained knowledge due to catastrophic forgetting resulting from the inaccessibility of pre-trained data. As a result, directly performing CFG guides the generation to rely only on the limited support of the target domain, leading to high variance in density fitness and aggravating the OOD samples (as shown in Figure \ref{fig:overview-relation-dog-cfg}(d)).
In contrast, DoG addresses these limitations by integrating the original pre-trained model as the unconditional guiding model, as depicted in Figure \ref{fig:overview-relation-dog-cfg}(b) and (e). 
DoG leverages the entire pre-trained distribution, which is typically more extensive than the target domain's distribution, to guide the generative process. 

Remarkably, DoG can be implemented by executing both the fine-tuned model and the pre-trained model once each, thus not introducing additional computational costs compared to CFG during the sampling process. Unlike CFG, DoG separates the unconditional reference from the fine-tuned networks, allowing for more focused optimization on fitting the conditional density and reducing conflicts associated with competing objectives from the unconditional model. This strategic decoupling enhances the model's ability to harness pre-trained knowledge without the interference of unconditional training dynamics, leading to improved stability and effectiveness in generating high-quality, domain-consistent samples.

\subsection{Empirical and Theoretical Insights Behind Domain Guidance}
\label{sec:method-analysis}

We provide both empirical and theoretical evidence to demonstrate why Domain Guidance (DoG) significantly outperforms CFG when paired with standard fine-tuning. The advantages of DoG are primarily twofold: 1) DoG leverages pre-trained knowledge to guide generation within the target domain, achieving enhanced domain alignment, and 2) the unconditional guiding model in CFG often suffers from high variance due to underfitting in conditions of insufficient training and low-data availability in the target domain. This leads to an increased frequency of out-of-distribution samples.

\begin{figure}[t]
    \centering
      \includegraphics[width=1.0\textwidth]{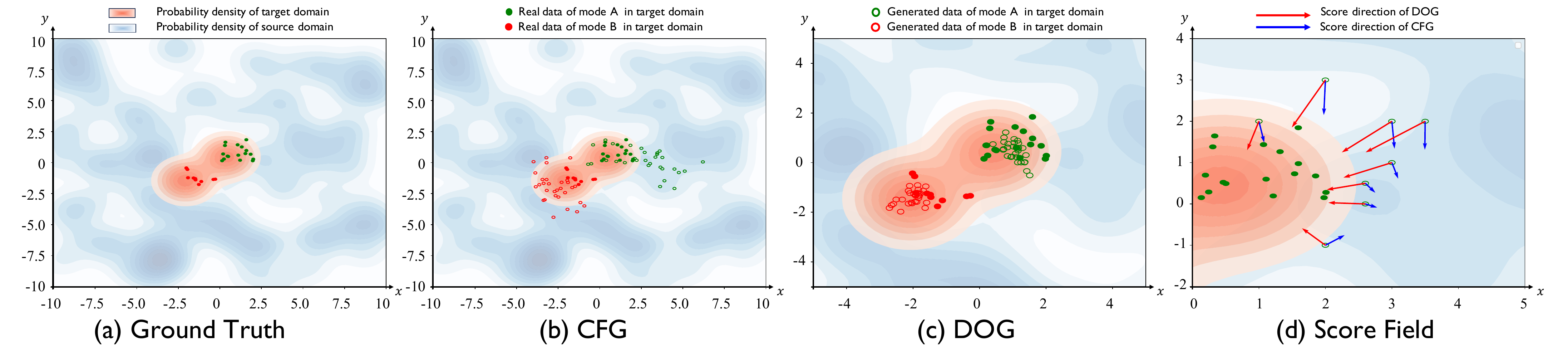}
        \label{fig:gaussian-example}
        \vspace{-10pt}
    \caption{A mixture of Gaussians synthetic dataset with different colored dots represent modes of different classes. In (a), the target domain is defined by the orange area, while the pre-training distribution forms the blue background. Green and red dots represent two classes, with filled dots indicating in-domain real data.Sampling results from these classes after model fine-tuning are denoted by circles with corresponding color. (b) illustrates how CFG leads to out-of-domain samples by disregarding pre-trained knowledge, while (c) demonstrates how DoG maintains domain consistency by effectively utilizing pre-trained data. (d) contrasts the directional guidance provided by DoG (red arrows) against CFG (blue arrows) for intermediate samples $\mathbf{x}_{\text{mid}}$, showing how DoG steers samples towards the domain-specific regions, unlike CFG which may lead samples towards outliers.}
    \label{fig:2d-gaussian-toy}
\end{figure}

\paragraph{DoG leverages the pre-trained knowledge.} Building upon the conceptual differences illustrated in Figure \ref{fig:overview-relation-dog-cfg}, we analyze a 2D Mixture Gaussian synthetic dataset as a concrete example (Figure \ref{fig:2d-gaussian-toy}). This dataset features hundreds of modes, where a subset is designated as the target domain and the remainder as the source domain (shown in Figure
This example consists of a mixture of Gaussian distributions with hundreds of modes, where a subset of modes is selected for the target domain, and others remain as the source domain (As shown in Figure \ref{fig:2d-gaussian-toy}(a)). We pre-train a small diffusion model on the source domain and subsequently fine-tune it on the target domain, observing distinct behaviors between CFG and DoG. 
Figure \ref{fig:2d-gaussian-toy}(b) reveals that CFG biases sampling paths away from high-density centers, leading to outlier generations and loss of domain consistency. Conversely, as shown in Figure \ref{fig:2d-gaussian-toy}(c), DoG leverages dense pre-trained data to guide samples accurately toward high-density areas of the target domain, thereby enhancing generation quality. Details of the setup are provided in Appendix \ref{app:2d-toy}.

\paragraph{Theoretical insights into DoG.} Beyond empirical observation, we present theoretical insights into DoG, conceptualizing it as an augmentation of classifier guidance \citep{diff-beat-dhariwal2021diffusion} to the conventional CFG sampling steps:
\begin{proposition}\label{prop:dog}

    \begin{equation}
    \nabla_{\mathbf{x}_t} \log p_w^{\dog}(\mathbf{x}_t|c,\gD^{\tgt}) = \nabla_{\mathbf{x}_t}  \log p_w^{\cfg}({\mathbf{x}_t}|c,\gD^{\tgt}) + (w-1)\nabla_{\mathbf{x}_t} \log p(\gD^{\tgt}|{\mathbf{x}_t})
\end{equation}
\end{proposition}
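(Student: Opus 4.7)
The plan is to translate both $\nabla_{\mathbf{x}_t} \log p_w^{\dog}$ and $\nabla_{\mathbf{x}_t} \log p_w^{\cfg}$ into their score-function decompositions, take their difference, and identify the residual with $(w-1)\nabla_{\mathbf{x}_t} \log p(\gD^{\tgt}|\mathbf{x}_t)$ via a single application of Bayes' rule. The key conceptual point I would emphasize at the outset is the distinction between the two ``unconditional'' references: in standard CFG applied to a fine-tuned network, the unconditional branch has been trained only on target-domain data, so it represents $p(\mathbf{x}_t \mid \gD^{\tgt})$; in DoG, by contrast, the unconditional branch is the frozen pre-trained model $\bm{\epsilon}_{\bm{\theta}_0}$, representing $p(\mathbf{x}_t)$ marginalized over the full pre-training distribution. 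This asymmetry is exactly what the proposition is quantifying.

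First I would write, using the score parametrization that corresponds to the $\bm{\epsilon}$-based updates in \Eqref{DoG-conditional} and \Eqref{eq-std-cfg} with conditioning on both $c$ and $\gD^{\tgt}$,
\begin{align*}
\nabla_{\mathbf{x}_t} \log p_w^{\cfg}(\mathbf{x}_t|c,\gD^{\tgt}) &= \nabla_{\mathbf{x}_t} \log p(\mathbf{x}_t|c,\gD^{\tgt}) + (w-1)\bigl(\nabla_{\mathbf{x}_t} \log p(\mathbf{x}_t|c,\gD^{\tgt}) - \nabla_{\mathbf{x}_t} \log p(\mathbf{x}_t|\gD^{\tgt})\bigr), \\
\nabla_{\mathbf{x}_t} \log p_w^{\dog}(\mathbf{x}_t|c,\gD^{\tgt}) &= \nabla_{\mathbf{x}_t} \log p(\mathbf{x}_t|c,\gD^{\tgt}) + (w-1)\bigl(\nabla_{\mathbf{x}_t} \log p(\mathbf{x}_t|c,\gD^{\tgt}) - \nabla_{\mathbf{x}_t} \log p(\mathbf{x}_t)\bigr).
\end{align*}
Subtracting the first line from the second cancels the conditional terms and yields $\nabla_{\mathbf{x}_t} \log p_w^{\dog} - \nabla_{\mathbf{x}_t} \log p_w^{\cfg} = (w-1)\bigl(\nabla_{\mathbf{x}_t} \log p(\mathbf{x}_t|\gD^{\tgt}) - \nabla_{\mathbf{x}_t} \log p(\mathbf{x}_t)\bigr)$.

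Next I would invoke Bayes' rule, $p(\mathbf{x}_t|\gD^{\tgt}) = p(\gD^{\tgt}|\mathbf{x}_t)\,p(\mathbf{x}_t)/p(\gD^{\tgt})$. Taking $\log$ and then $\nabla_{\mathbf{x}_t}$ kills the $\mathbf{x}_t$-independent factor $p(\gD^{\tgt})$ and gives $\nabla_{\mathbf{x}_t} \log p(\mathbf{x}_t|\gD^{\tgt}) - \nabla_{\mathbf{x}_t} \log p(\mathbf{x}_t) = \nabla_{\mathbf{x}_t} \log p(\gD^{\tgt}|\mathbf{x}_t)$. Substituting into the previous identity gives the claim. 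Rearranging into the form of a classifier-style correction then makes explicit the interpretation in the text: DoG is CFG plus an implicit classifier-guidance term pointing toward the target domain with strength $(w-1)$.

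The main obstacle is not computational but definitional: I need to justify that the unconditional branch in the CFG line really models $p(\mathbf{x}_t|\gD^{\tgt})$ rather than $p(\mathbf{x}_t)$. I would anchor this in the training objective \Eqref{eq:diff-loss-cfg}, which dropouts the \emph{class} label $c$ but draws $\mathbf{x}_0$ exclusively from the target dataset $\gD^{\tgt}$; hence the unconditional head necessarily converges to the target-domain marginal. Once this identification is accepted, the rest of the proof is the one-line Bayes calculation above, and no further approximations or assumptions are needed.
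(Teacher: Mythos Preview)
Your proposal is correct and follows essentially the same route as the paper's proof: write out the two guided scores, subtract to obtain $(w-1)\bigl(\nabla_{\mathbf{x}_t}\log p(\mathbf{x}_t|\gD^{\tgt})-\nabla_{\mathbf{x}_t}\log p(\mathbf{x}_t)\bigr)$, and apply Bayes' rule to rewrite that difference as $\nabla_{\mathbf{x}_t}\log p(\gD^{\tgt}|\mathbf{x}_t)$. Your added discussion justifying why the CFG unconditional branch models $p(\mathbf{x}_t|\gD^{\tgt})$ is a nice touch that the paper handles simply by stating the two score decompositions as definitions in the full theorem statement.
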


The details can be found in Appendix \ref{App:proof}. This adjustment means that the DoG sampling distribution $p_w^{{\dog}}$ is tuned to discourage sampling from out-of-distribution areas, effectively using the pre-trained domain knowledge to regularize the process and improve domain consistency:
\begin{equation}
    \frac{p_w^{\dog}}{p_w^{\cfg}} \propto p(\gD^{\tgt}|\mathbf{x}_t)^{w-1}\ll 1   ~~\text{ For} ~~\mathbf{x}_t\notin \gD^{\tgt},
\end{equation}
highlighting how DoG steers the sampling process toward the core of the target domain manifold, thereby avoiding low-probability regions and reducing outlier generations. Figure \ref{fig:2d-gaussian-toy}(d) visually illustrates the stark guiding distinctions between CFG and DoG, underscoring the effectiveness of DoG.

\paragraph{The poor fitness of the guiding model.} The second reason that limits the performance of CFG is the bad fitness of the guiding model. The low-data regime of the target domain, the conflicts arising from unconditional objectives, along with only a small slice of the training budget, result in poor fitness of the fine-tuned model \cite{chen2023scoreapproximationestimationdistribution,zhang2024minimaxoptimalityscorebaseddiffusion}. The visual quality difference is obvious if we simply inspect the unconditional images generated by the fine-tuned model. Furthermore, the unconditional case tends to work so poorly that the corresponding quantitative numbers are hardly ever reported. For example, the fine-tuned DiT-XL/2 with Stanford Car exhibits an FID of 6.57 in conditional settings versus 22.8 unconditionally.

\begin{theorem}\label{thm:convergence}
Denote the mariginal distribution at timestep $t$ conditioning on $N$ data samples $\gD=\sets{\bm{y}_i}_{i=1}^N$ as $\hat{p}_t(\mathbf{x}_t) = \sum_{i=1}^N\frac{1}{N}q(\mathbf{x}_t|\mathbf{x}_0=\bm{y}_i)$, with $\bm{y}_i\sim p(\bm{y})$. Denote the true marginal distribution at $t$ as $p^*_t(\mathbf{x}_t) = \int_{\bm{y}} p(\bm{y}) q(\mathbf{x}_t|\mathbf{x}_0=\bm{y})$. The forward process is defined as 
$q(\mathbf{x}_t|\mathbf{x}_0={\bm{y}})=\gN\bracket{\mathbf{x}_t|\sqrt{\bar{\alpha_t}}\bm{y};\bar{\beta_t}\mathbf{I}}$. Consider the expected estimation error between $\hat{p}_t$ and $p^*_t$ with respect to all the datasets $\gD\sim p(\gD)$, we have for all $\mathbf{x}_t$:
 \begin{equation*}
         \EE_{\gD\sim p(\gD)} \mbracket{\abs{\hat{p}_t(\mathbf{x}_t)-p^*_t(\mathbf{x}_t)}} \leq \frac{1}{\sqrt{N}}.
 \end{equation*}
 \end{theorem}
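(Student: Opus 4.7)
The plan is to exploit the i.i.d.\ structure of $\gD = \{\bm{y}_i\}_{i=1}^N$ and recognize $\hat{p}_t(\mathbf{x}_t)$ as an empirical mean of i.i.d.\ random variables $Z_i(\mathbf{x}_t) := q(\mathbf{x}_t\mid\mathbf{x}_0 = \bm{y}_i)$ whose common expectation, under $\bm{y}_i \sim p(\bm{y})$, is precisely $p^*_t(\mathbf{x}_t)$ by the very definitions given in the statement. The problem then collapses to a standard ``$1/\sqrt{N}$'' empirical-mean concentration question, for which the natural tool is a second-moment argument combined with Jensen.

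First, I would pin down the i.i.d.\ representation: write
\begin{equation*}
    \hat{p}_t(\mathbf{x}_t) - p^*_t(\mathbf{x}_t) = \frac{1}{N}\sum_{i=1}^{N}\bigl(Z_i(\mathbf{x}_t) - \EE_{\bm{y}\sim p(\bm{y})}[q(\mathbf{x}_t\mid\mathbf{x}_0=\bm{y})]\bigr),
\end{equation*}
which makes the summands i.i.d.\ with mean zero. Next I would apply Jensen's inequality to the concave square root to pass from the $L^1$ to the $L^2$ deviation, then use independence to reduce the variance of the empirical mean to that of a single term:
\begin{equation*}
    \EE_{\gD}\bigl[|\hat{p}_t(\mathbf{x}_t)-p^*_t(\mathbf{x}_t)|\bigr] \;\leq\; \sqrt{\EE_{\gD}\bigl[(\hat{p}_t(\mathbf{x}_t)-p^*_t(\mathbf{x}_t))^2\bigr]} \;=\; \frac{1}{\sqrt{N}}\,\sqrt{\mathrm{Var}\bigl(Z_1(\mathbf{x}_t)\bigr)}.
\end{equation*}

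The remaining task is to show $\mathrm{Var}(Z_1(\mathbf{x}_t)) \leq 1$. Using the explicit form $q(\mathbf{x}_t\mid\bm{y}) = (2\pi\bar{\beta}_t)^{-d/2}\exp\!\bigl(-\|\mathbf{x}_t-\sqrt{\bar{\alpha}_t}\,\bm{y}\|^2/(2\bar{\beta}_t)\bigr)$, the density is pointwise bounded by $M_t := (2\pi\bar{\beta}_t)^{-d/2}$, giving $\mathrm{Var}(Z_1) \leq \EE[Z_1^2] \leq M_t \cdot \EE[Z_1] = M_t\, p^*_t(\mathbf{x}_t)$; and since $p^*_t$ is itself a mixture of Gaussians with the same isotropic covariance, $p^*_t(\mathbf{x}_t) \leq M_t$ as well, so $\mathrm{Var}(Z_1) \leq M_t^2$.

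The main obstacle is exactly this last step: the Gaussian pointwise bound yields $M_t^2 = (2\pi\bar{\beta}_t)^{-d}$, which is $\leq 1$ only once $\bar{\beta}_t \geq 1/(2\pi)$ — a regime that is typical after sufficient noise injection but not at very small $t$. I would therefore expect the proof to either invoke a normalization convention (e.g.\ measuring the density relative to its own maximum $M_t$, or working in the noise-dominated regime where $\bar{\beta}_t$ is bounded below) or to absorb the $M_t$ factor into an implicit constant. Modulo this scaling, the three-step chain — i.i.d.\ representation, Jensen to second moment, variance-of-empirical-mean scaling — delivers the claimed $1/\sqrt{N}$ rate.
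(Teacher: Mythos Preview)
Your approach is essentially identical to the paper's: Jensen to pass from $L^1$ to $L^2$, expand the square, use independence to kill the cross terms, and bound the per-sample variance by $1$. The concern you raise about $\mathrm{Var}(Z_1(\mathbf{x}_t))\leq 1$ is legitimate and is precisely the step the paper handles least carefully --- it simply asserts $\int p(\bm{y})\bigl(q(\mathbf{x}_t\mid\bm{y})-p^*_t(\mathbf{x}_t)\bigr)^2\leq 1$ without further justification, so your proposal is at least as complete as the paper's own argument, and your diagnosis of the scaling issue (the bound is dimension- and $\bar\beta_t$-dependent in general) is more honest than what appears there.
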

\proof{See Appendix \ref{App:proof}}

\paragraph{Ramark.} Given that $N^{\tgt} \ll N^{\text{src}}$, it can be assumed that across most of the manifold, $\epsilon_{{\bm{\theta}}_0}(x)$ offers a better approximation to the ground truth marginal distribution, particularly in areas outside the target domain.
In scenarios where a pre-trained model is transferred to a domain lacking in training samples, fine-tuning enhances performance on in-distribution targets but often falters on out-of-distribution samples \citet{kumar2022fine}. This propensity leads to a sequence of errors in diffusion model sampling, further compounding the issues faced during the guidance process. By incorporating DoG, we mitigate these errors, steering the model away from out-of-domain areas and substantially improving sampling accuracies.

\section{Experiments}
We evaluate DoG on seven well-established fine-grained downstream datasets, comparing generation quality against standard fine-tuning with CFG. Additionally, we conduct comprehensive experiments to analyze the specific properties of each component within DoG. Detailed implementation information can be found in Appendix \ref{App:Imple}.

\paragraph{Setup.}\label{sec:setups}
Fine-tuning a pre-trained diffusion model to a target downstream domain is a fundamental task in transfer learning. We utilize the publicly available pre-trained model DiT-XL/2\footnote{https://dl.fbaipublicfiles.com/DiT/models/DiT-XL-2-256x256.pt} \citep{DiT-peebles2023scalable}, which is pre-trained on ImageNet at a resolution of $256 \times 256$ for 7 million training steps, achieving a Fréchet Inception Distance (FID) of 2.27 \citep{fid-heusel2017gans}. Our benchmark setups include 7 fine-grained downstream datasets: Food101 \citep{dataset-food101-bossard2014food}, SUN397 \citep{dataset-sun397-xiao2010sun}, DF20-Mini \citep{dataset-DF20M-picek2022danish}, Caltech101 \citep{dataset-caltech101-griffin2007caltech}, CUB-200-2011 \citep{dataset-cub-wah2011caltech}, ArtBench-10 \citep{dataset-artbench-liao2022artbench}, and Stanford Cars \citep{stanford-car-krause20133d}. Most of these datasets are selected from CLIP downstream tasks except ArtBench-10 and DF-20M. DF-20M has no overlap with ImageNet, while ArtBench-10 features a distribution that is completely distinct from ImageNet. This diversity allows for a more comprehensive evaluation of DoG in scenarios where pre-trained data are significantly different from the target domain. We perform fine-tuning for 24,000 steps with a batch size of 32 at $256 \times 256$ resolution for all benchmarks. The standard fine-tuned models are trained in a CFG style, with a label dropout ratio of 10\%. Each fine-tuning task is executed on a single NVIDIA A100 40GB GPU over approximately 6 hours. Following prior evaluation protocols \citep{DiT-peebles2023scalable, difffit-xie2023difffit}, we generate 10,000 images with 50 sampling steps per benchmark, setting the guidance weights for both CFG and DoG to 1.5. We calculate metrics between the generated images and a test set, reporting the widely used FID\footnote{https://github.com/mseitzer/pytorch-fid} \citep{fid-heusel2017gans} and the more recent FD$_\text{DINOv2}$\footnote{https://github.com/layer6ai-labs/dgm-eval} \citep{FDDINOstein2024exposing} for a richer evaluation. More detailed results of precision and recall can be found in Appendix \ref{App:full-results}.

\begin{table*}[t]
    \setlength{\abovecaptionskip}{-0.01cm}
    \setlength{\belowcaptionskip}{-0.1cm}
    \caption{Comparisons on downstream tasks with pre-trained DiT-XL-2-256x256. FID $\downarrow$}
    \vskip 0.1in
    \centering
    \setlength{\tabcolsep}{2.0pt}
    \scalebox{0.9}{
        \begin{tabular}{l|ccccccc|c}
            \toprule
            \diagbox{Method}{Dataset} & \makecell[c]{Food} & \makecell[c]{SUN} & \makecell[c]{Caltech} & \makecell[c]{CUB\\Bird} & \makecell[c]{Stanford \\ Car} & \makecell[c]{DF-20M} & \makecell[c]{ArtBench} & \makecell[c]{Average\\FID} \\
            \midrule
            Fine-tuning (w/o guidance) & 16.04 & 21.41 & 31.34 & 9.81 & 11.29 & 17.92 & 22.76 & 18.65 \\
            + Classifier-free guidance & 10.93 & 14.13 & 23.84 & 5.37 & 6.32 & 15.29 & 19.94 & 13.69 \\
            \midrule
            + \textbf{Domain guidance} & \textbf{9.25} & \textbf{11.69} & \textbf{23.05} & \textbf{3.52} & \textbf{4.38} & \textbf{12.22} & \textbf{16.76} & \textbf{11.55} \\
            Relative promotion & 15.36\% & 17.27\% & 3.31\% & 34.45\% & 30.70\% & 20.08\% & 15.95\% & 19.59\% \\
            \bottomrule 
        \end{tabular}
    }
    \vspace{0.5cm}
    \label{table: fine-tuning-downstream-datasets}
    \vspace{-0.5cm}
\end{table*}

\begin{table*}[t]
    \setlength{\abovecaptionskip}{-0.01cm}
    \setlength{\belowcaptionskip}{-0.1cm}
    \caption{Comparisons on downstream tasks with pre-trained DiT-XL-2-256x256. FD$_\text{DINOv2}$ $\downarrow$ }
    \vskip 0.1in
    \centering
    \setlength{\tabcolsep}{2.0pt}
    \scalebox{0.9}{\begin{tabular}{l|ccccccc|c}
    \toprule
    \diagbox{Method}{Dataset}&\makecell[c]{~Food~} & ~SUN~ & \makecell[c]{Caltech}  & \makecell[c]{CUB\\
    Bird}  & \makecell[c]{Stanford \\ Car} & \makecell[c]{DF-20M}  & \makecell[c]{ArtBench} &  \makecell[c]{Average\\FD$_\text{DINOv2}$} \\
    \midrule
    Fine-tuning (w/o guidance)                & 626.90 & 796.77  & 551.69 & 421.29 & 351.97 & 594.50 & 337.87 & 501.48
    \\
    + Classifier-free guidance          & 423.90 & 653.19 & 416.78 & 198.12 & 219.25 & 326.77 & 291.23 & 363.58 \\
    \midrule
    + \textbf{Domain guidance}          & \textbf{351.93} & \textbf{620.58} & \textbf{392.92} & \textbf{140.00} & \textbf{134.15} & \textbf{151.39} & \textbf{257.39} & \textbf{292.62} \\
    Relative promotion &
20.0\% & 5.0\% & 5.7\% &   29.3\% &38.8\% & 53.7\% & 11.62\% &23.4\%  \\
    \bottomrule 
    \end{tabular}
    }
    \vspace{0.5cm}

    \label{table: fine-tuning-downstream-datasets-dinov2}
    \vspace{-0.7cm}
\end{table*}

\paragraph{Results} The FID results are summarized in Table \ref{table: fine-tuning-downstream-datasets}, while the FD$_\text{DINOv2}$ results are presented in Table \ref{table: fine-tuning-downstream-datasets-dinov2}. Our results indicate that standard CFG is crucial for class-conditional generation, despite its inherent challenges. In contrast, the proposed DoG consistently improves all transfer tasks, effectively addressing the limitations faced by CFG and significantly enhancing generation quality—resulting in a relative FID improvement of 19.59\% compared to CFG. Notably, the last two columns for DF20-Mini and ArtBench-10 exhibit a significant discrepancy from the pre-trained domain. Despite this challenge, DoG performs well, showcasing its robustness in various transfer scenarios.  Even when the guided pre-trained model is considerably distant from the target domain, DoG effectively steers the generation process, enhancing both domain alignment and overall generation quality. This capability underscores DoG's utility in bridging substantial gaps between pre-trained and target domains, ensuring consistent performance across diverse settings.

\subsection{Ablation Study and Discussion}

\begin{figure}[t]
    \centering
    \begin{minipage}[t]{0.40\textwidth}
    \centering
        \captionof{table}{Results of CFG and DoG on varying sampling steps. FID $\downarrow$}
        \label{table:sampling-step-comparison}
         \vspace{-0.2cm}
         \scalebox{0.9}{\begin{tabular}[t]{c|cc|cc}
            \toprule
            \multirow{2}{*}{Steps
            } & \multicolumn{2}{c|}{CUB bird}
              & \multicolumn{2}{c}{SUN} \\ 
              & CFG  & DoG & {CFG} & DoG \\
            \midrule
           25 & 9.69 &\textbf{4.60} & 24.34 & \textbf{19.87} \\
           50 & 5.37& \textbf{3.52 }& 14.13& \textbf{11.69}\\
           100 & 4.27&\textbf{3.35}  & 10.07 & \textbf{8.71}\\
            
            \bottomrule 

        \end{tabular}
        }

    \end{minipage}\hfill
    \begin{minipage}[t]{0.55\textwidth}
        \centering
               \captionof{table}{Results of DoG on varying training strategies. FID $\downarrow$}
               \label{table:step-comparison}
               \vspace{-0.2cm}
         \scalebox{0.9}{\begin{tabular}[t]{cc|cccc}
            \toprule
            \makecell[c]{Training \\ steps} & \makecell[c]{Dropout} & \makecell[c]{ArtBench} & \makecell[c]{Caltech} & \makecell[c]{DF20M}  \\
            \midrule
            24,000&\ding{51}&  16.76 &  23.05 & 12.22 \\
            21,600&\ding{55}&16.33    & 22.93 & 11.83   \\
            24,000&\ding{55}&16.13 & 22.44 & 11.60  \\
            \bottomrule 

        \end{tabular}
        }
        \end{minipage}
\end{figure}

\begin{figure}[t]
    \centering
      \includegraphics[width=\textwidth]{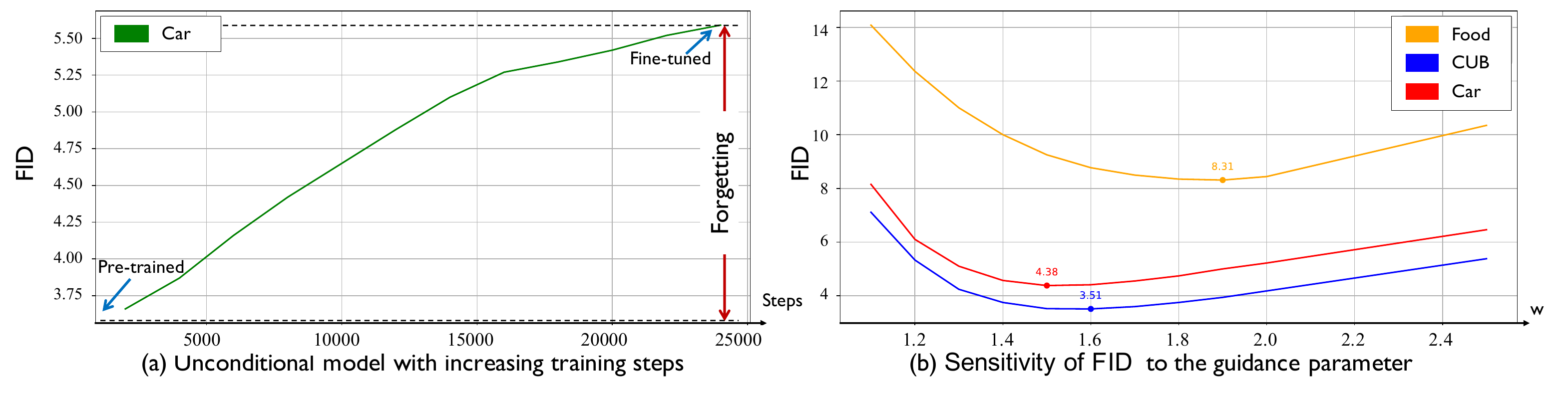}
        \caption{Component analysis of DoG. (a) illustrates that a separately fine-tuned unconditional guiding model degrades generation performance as training steps increase. (b) shows the sensitivity of FID to guidance parameters in DoG.}
        \label{fig:ablation_uncond_and_w}
        \vspace{-10pt}
\end{figure}

\paragraph{Discussion on unconditional guiding models.}
Building on the discussion of unconditional guiding models presented in Section \ref{sec:method-analysis}, a pertinent question arises: Can extending the training budget for a separate unconditional model address this issue? To explore this, we conducted an analysis using the CUB dataset. We fine-tuned separate unconditional guiding models with varying numbers of fine-tuning steps and employed them to guide the fine-tuned conditional model. As illustrated in Figure \ref{fig:ablation_uncond_and_w}(a), this approach is ineffective—performance actually deteriorates as the number of fine-tuning steps increases. This counterintuitive outcome can be attributed to catastrophic forgetting and overfitting, where the model loses valuable pre-trained knowledge and becomes overly focused on the target domain in a low-data regime, thereby diminishing the effectiveness of the guidance.

\paragraph{Discarding the unconditional training.} 
As previously noted, DoG focuses exclusively on modeling the class conditional density without the necessity of jointly fitting an unconditional guiding model. To illustrate this, we present a comparison of different training strategies in Table \ref{table:sampling-step-comparison} using the Caltech101, DF20M, and ArtBench datasets. The dropout ratio $\delta$ is set to $10\%$ when applicable; otherwise, it is set to $0$, indicating no unconditional training involved. The first row displays the results from standard fine-tuning in a CFG style (with DoG results reported in Table \ref{table: fine-tuning-downstream-datasets}). The second row corresponds to the same 21,600 conditional training steps (90\% of the standard fine-tuning). The third row demonstrates that, under the same computational budget, DoG yields superior results. The improvement observed in the second row suggests that eliminating conflicts arising from the multi-task training of the unconditional guiding model is advantageous, along with achieving a 10\% reduction in training costs.

\paragraph{Sensitivity of the guidance weight factor.} Figure \ref{fig:ablation_uncond_and_w}(b) probs the sensitivity to guidance weight factor in DoG across various datasets. Our best results are typically achieved with values of $w$ ranging from $1.4$ to $2$, indicating a relatively narrow search space for this parameter. To ensure fair comparisons with limited resources, all other results reported in this paper are fixed at $w=1.5$.

\paragraph{Sampling steps.}
We also evaluate DoG using various sampling parameters, typically halving and doubling the default 50 sampling steps employed in iDDPM \citep{iddpm-nichol2021improved}. Table \ref{table:sampling-step-comparison} shows a consistent improvement in performance across these configurations. Notably, the guidance signal from DoG demonstrates a more significant enhancement with fewer steps, suggesting that the guidance becomes more precise due to the reduced variance provided by DoG.

\subsection{Qualitative Results} 

Figure \ref{fig:qualitative} 
showcases examples of generated images for fine-tuned downstream tasks as listed in Table \ref{table: fine-tuning-downstream-datasets}. These examples demonstrate that both CFG and DoG enhance the perceptual quality of images, with clearer outcomes as the guidance weight increases. However, CFG, hampered by its insufficient utilization of pre-trained knowledge and the limitations of its poor unconditional guiding model, often directs the sampling process toward out-of-distribution (OOD) outliers. This misdirection can result in noticeable distortions or blurring in the generated images.
In contrast, DoG effectively counters these challenges, steering the generative process towards more accurate and visually appealing representations. A notable example is seen in the depiction of an airplane in the middle-left panel of the figure. Under CFG, the airplane's fuselage appears fragmented, and this distortion intensifies as the guidance weight increases. DoG, on the other hand, maintains the integrity of the airplane’s structure, producing a coherent and detailed image without the distortions observed with CFG.

\begin{figure}[t]
    \centering
      \includegraphics[width=0.95\textwidth]{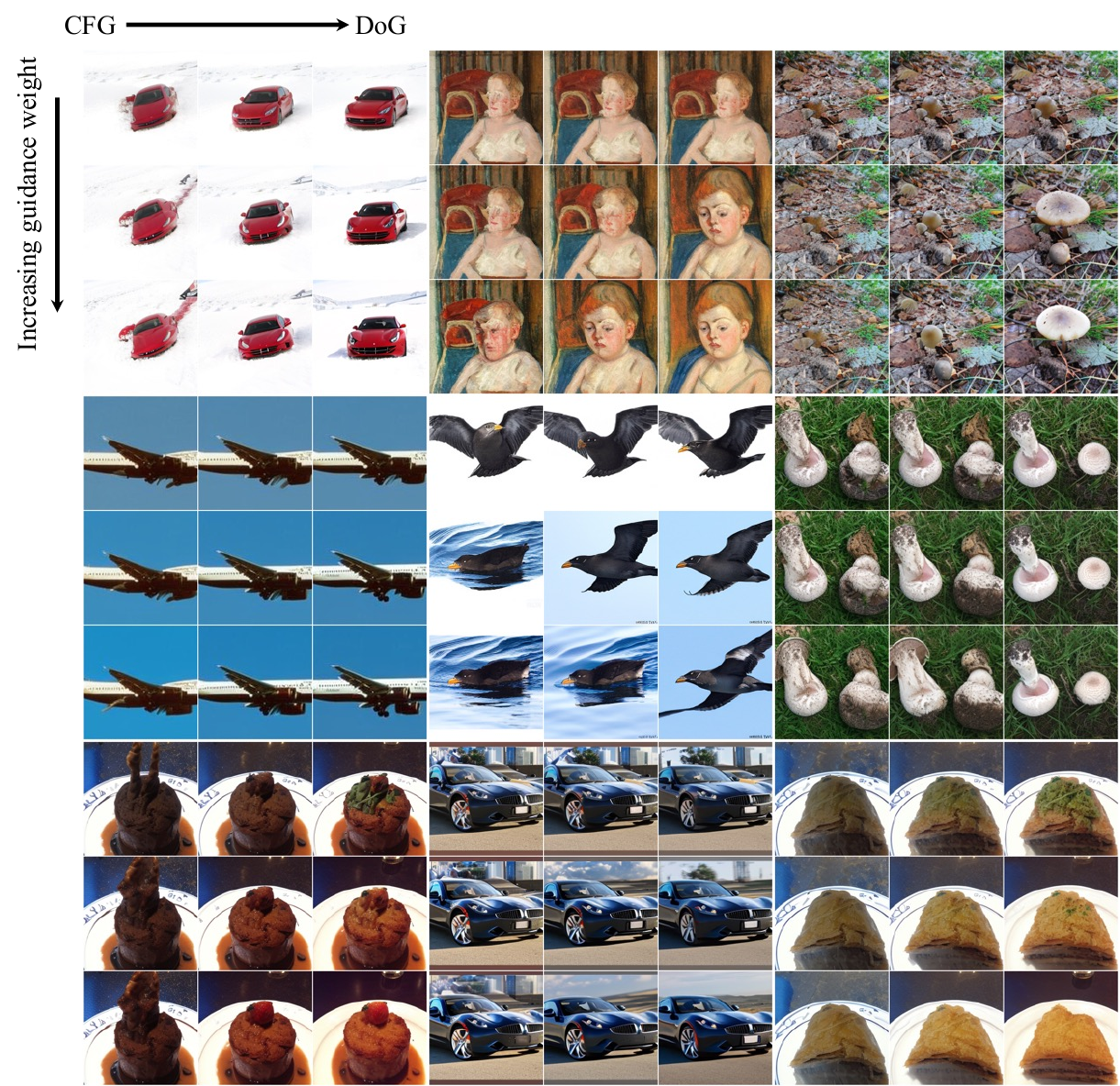}
        \caption{Qualitative showcases for DoG across downstream tasks. \textit{Best viewed zoomed in.} Each nine-grid case compares CFG (left column) and DoG (right column), with the middle column blending the two. Rows increase guidance weights from $\{2, 3, 4\}$.}
        \label{fig:qualitative}
        \vspace{-10pt}
\end{figure}

\begin{figure}[t]
    \centering
      \includegraphics[width=0.95\textwidth]{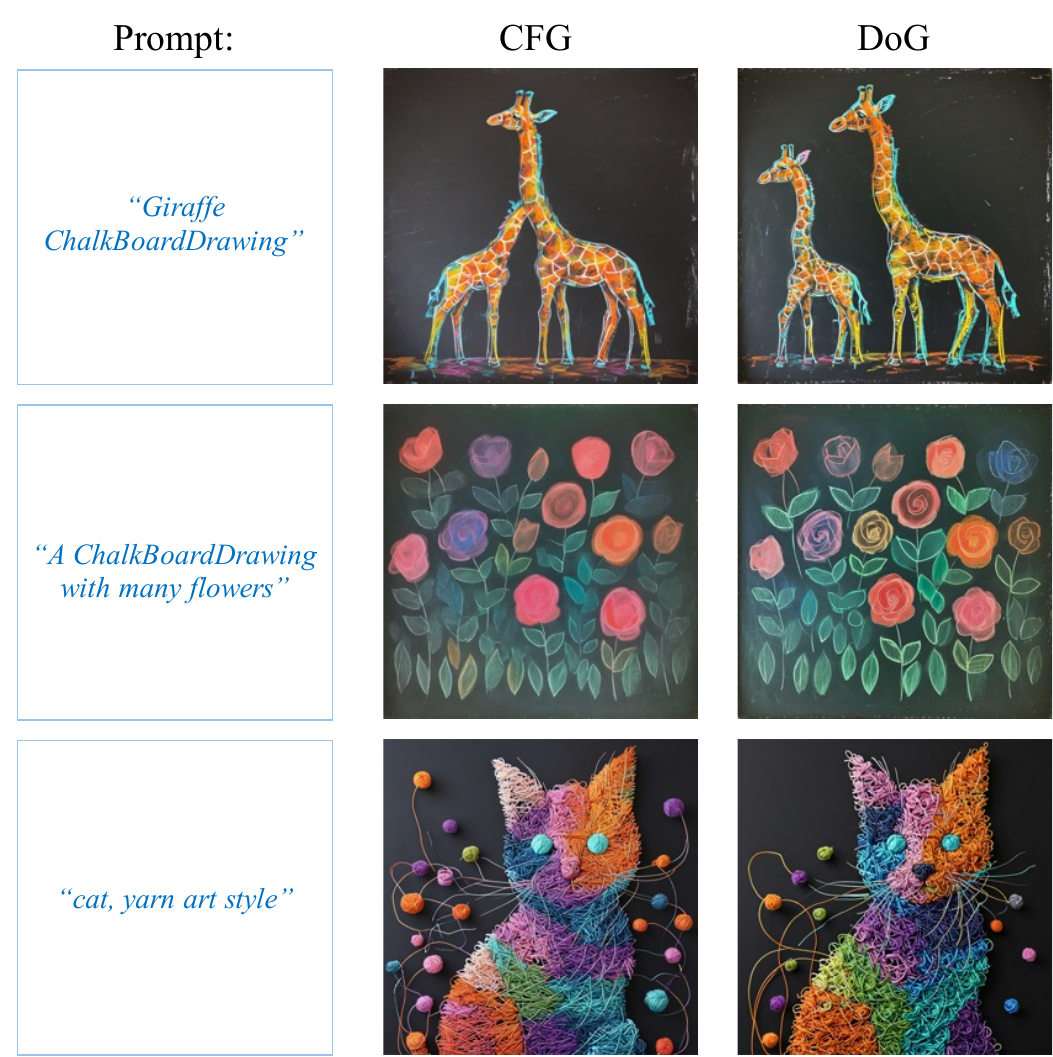}
      
        \caption{Qualitative showcases for LoRA-based transfer tasks with SDXL. The first two rows feature showcases in Chalkboard style transfer tasks, and the bottom row shows the Yarn art style transfer task. (A default guidance scale of 5.0 for each generation)}
        
        \label{fig:SDXL}
        \vspace{-10pt}
\end{figure}

\subsection{Applying DoG to Stable Diffusion with LoRAs}
To evaluate the adaptability of DoG across a broader range of models and in conjunction with LoRA fine-tuning, we conduct experiments using off-the-shelf LoRAs of the SDXL model available in the Huggingface community. Specifically, we employ the SDXL model\footnote{\url{https://huggingface.co/stabilityai/stable-diffusion-xl-base-1.0}} and select two off-the-shelf LoRA adapters: Chalkboard style\footnote{\url{https://huggingface.co/Norod78/sdxl-chalkboarddrawing-lora}} and Yarn art style\footnote{\url{https://huggingface.co/Norod78/SDXL-YarnArtStyle-LoRA}}. 
In our implementation of DoG, we compute the text-conditional output using the LoRA adapters while disabling the LoRA during the computation of the unconditional output. We adopt a default guidance scale of 5.0 for each generation. 
The qualitative results, as showcased in Figure \ref{fig:SDXL}, indicate that DoG can produce more vivid and contextually enriched generations compared to CFG.
To quantitatively assess our method, we calculate the CLIP score between the generated images and the prompts in the fine-tuning dataset. In the ChalkboardDrawing Style task, the CLIP Score increases from 27.23 with CFG to 35.24 with DoG. In the Yarn Art style, the CLIP Score increases from 34.89 to 35.03. 
These results demonstrate that DoG seamlessly adapts to pre-trained text-to-image models and integrates effectively with LoRA-based fine-tuning. 
Additional qualitative comparisons can be found in Appendix \ref{sec:additional-case}.

\section{Conclusion and Future Works}

In this paper, we provide a novel conditional generation perspective for the transfer of pre-trained diffusion models. Based on this viewpoint, we introduce \textit{domain guidance}, a simple transfer approach in a similar format of classifier-free guidance, improving the transfer performance significantly. We provide both empirical and theoretical evidence revealing that the effectiveness of the DoG stems from leveraging the knowledge of the pre-trained model to improve domain consistency and reduce OOD accumulated error in the sampling process. Given the promising results in this paper, potential future work could further explore the compositional guiding model for transfer learning or study a general large-scale pre-trained model serving as a unified guiding model, improving the transfer performance in arbitrary downstream tasks.


\subsubsection*{Acknowledgments}
This work was supported by the National Natural Science Foundation of China (U2342217 and
62021002), the BNRist Project, and the National Engineering Research Center for Big Data Software.

\bibliography{iclr2025_conference}
\bibliographystyle{iclr2025_conference}
\appendix
\section{Implementation Details}
\label{App:Imple}
In this section we provide the details of our experiments. All of our experiments are inplemented using PyTorch and conducted on NVIDIA A100 40G GPUs.
\subsection{Benchmark Description}
This section describes the benchmarks used for finetuning.

\textbf{Food101 \citep{dataset-food101-bossard2014food}} This dataset consists of 101 food categories with a total of 101,000 images. For each
class, 750 training images preserving some amount of noise and 250 manually reviewed test images
are provided. All images were rescaled to have a maximum side length of 512 pixels.

\textbf{SUN397 \citep{dataset-sun397-xiao2010sun}} The SUN397 dataset contains 108,753 images of 397 well-sampled categories
from the origin Scene UNderstanding (SUN) database. The number of images varies across categories,
but there are at least 100 images per category. We finetune our domain model on a random partition of the
whole dataset with 76,128 training images, 10,875 validation images and 21,750 test images.

\textbf{DF20M \citep{dataset-DF20M-picek2022danish}} Danish Fungi 2020 (DF20) is a new fine-grained dataset and benchmark featuring highly accurate class
labels based on the taxonomy of observations submitted to the Danish Fungal Atlas. The dataset
has a well-defined class hierarchy and a rich observational metadata. It is characterized by a highly
imbalanced long-tailed class distribution and a negligible error rate. Importantly, DF20 has no
intersection with ImageNet, ensuring unbiased comparison of models fine-tuned from ImageNet
checkpoints.

\textbf{Caltech101 \citep{dataset-caltech101-griffin2007caltech}} The Caltech 101 dataset comprises photos of objects within 101 distinct categories,
with roughly 40 to 800 images allocated to each category. The majority of the categories have around
50 images. Each image is approximately 300×200 pixels in size.

\textbf{CUB-200-201 \citep{dataset-caltech101-griffin2007caltech}} CUB-200-2011 (Caltech-UCSD Birds-200-2011) is an expansion of the CUB-
200 dataset by approximately doubling the number of images per category and adding new annotations
for part locations. The dataset consists of 11,788 images divided into 200 categories.

\textbf{ArtBench10 \citep{dataset-artbench-liao2022artbench}} ArtBench-10 is a class-balanced, standardized dataset comprising 60,000 high-
quality images of artwork annotated with clean and precise labels. It offers several advantages over
previous artwork datasets including balanced class distribution, high-quality images, and standardized
data collection and pre-processing procedures. It contains 5,000 training images and 1,000 testing
images per style.

\textbf{Oxford Flowers \citep{dataset-flowers-nilsback2008automated}} The Oxford 102 Flowers Dataset contains high quality images of 102
commonly occurring flower categories in the United Kingdom. The number of images per category
range between 40 and 258. This extensive dataset provides an excellent resource for various computer
vision applications, especially those focused on flower recognition and classification.

\textbf{Stanford Cars \citep{stanford-car-krause20133d}} In the Stanford Cars dataset, there are 16,185 images that display 196 distinct
classes of cars. These images are divided into a training and a testing set: 8,144 images for training
and 8,041 images for testing. The distribution of samples among classes is almost balanced. Each
class represents a specific make, model, and year combination, e.g., the 2012 Tesla Model S or the
2012 BMW M3 coupe.

\subsection{Experiment Details}
For all our experiments, we use the ImageNet pre-trained DiT-XL/2 \citep{DiT-peebles2023scalable} as the unconditional model to provide the guidance in DoG. For finetuning on domain, we provide the hyperparameter configuration below:
\begin{table}[h]
\caption{Hyperparameter of domain transfer experiments}
\label{sample-table}
\begin{center}
\begin{tabular}{lc}
\toprule
\multicolumn{1}{c}{\bf Hyperparameter}  &\multicolumn{1}{c}{\bf Configuration} \\

\midrule 
Backbone&DiT-XL/2  \\
     Image Size& 256\\
     Batch Size & 32\\
     Learning Rate& 1e-4\\
     Optimizer & Adam\\
     Training Steps & 24,000\\
     Validation Interval & 24,000\\
     Sampling Steps & 50\\
     \bottomrule
\end{tabular}
\end{center}
\end{table}

\section{Proofs of Theoretical Explanation in Section \ref{sec:method-analysis}}
\label{App:proof}

\begin{theorem}[Full version of Proposition \ref{prop:dog}]
    {Denote 
\begin{align*}
    \nabla_{\mathbf{x}_t} \log p_w^{\dog}({\mathbf{x}_t}|c,\gD^{\tgt}) := \nabla_{\mathbf{x}_t} \log p({\mathbf{x}_t}|c,\gD^{\tgt}) + (w-1)\bracket{\nabla_{\mathbf{x}_t} \log p({\mathbf{x}_t}|c,\gD^{\tgt})-\nabla_{\mathbf{x}_t}\log p({\mathbf{x}_t})}
\end{align*}
as the underlying score function corresponding to domain guidance, and let
\begin{align*}
    \nabla_{\mathbf{x}_t} \log p_w^{\cfg}({\mathbf{x}_t}|c,\gD^{\tgt}) := \nabla_{\mathbf{x}_t} \log p({\mathbf{x}_t}|c,\gD^{\tgt}) + (w-1)\bracket{\nabla_{\mathbf{x}_t} \log p({\mathbf{x}_t}|c,\gD^{\tgt})-\nabla_{\mathbf{x}_t}\log p({\mathbf{x}_t}|\gD^{\tgt})}
\end{align*}
denote the score function of CFG. Then domain guidance is equivalent to applying classifier guidance to the target domain:}
\begin{equation}
    \nabla_{\mathbf{x}_t} \log p_w^{\dog}({\mathbf{x}_t}|c,\gD^{\tgt}) = \nabla_{\mathbf{x}_t}  \log p_w^{\cfg}({\mathbf{x}_t}|c,\gD^{\tgt}) + (w-1)\nabla_{\mathbf{x}_t} \log p(\gD^{\tgt}|{\mathbf{x}_t})
\end{equation}
\end{theorem}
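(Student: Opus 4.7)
The proposition compares the two prescribed score functions, so the natural approach is to simply subtract the definition of $\nabla_{\mathbf{x}_t} \log p_w^{\cfg}$ from that of $\nabla_{\mathbf{x}_t} \log p_w^{\dog}$ and show the residual equals $(w-1)\nabla_{\mathbf{x}_t} \log p(\gD^{\tgt}|\mathbf{x}_t)$. Both definitions share the leading term $\nabla_{\mathbf{x}_t} \log p(\mathbf{x}_t|c,\gD^{\tgt})$ and carry a common prefactor of $(w-1)$ multiplying a contrast, so the only content left after cancellation is the difference of unconditional-style references, namely
\begin{equation*}
(w-1)\bigl[\nabla_{\mathbf{x}_t}\log p(\mathbf{x}_t|\gD^{\tgt}) - \nabla_{\mathbf{x}_t}\log p(\mathbf{x}_t)\bigr].
\end{equation*}

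\textbf{Key step.} The remaining task is to identify this bracket as $\nabla_{\mathbf{x}_t} \log p(\gD^{\tgt}|\mathbf{x}_t)$, which is a one-line application of Bayes' rule: writing $p(\mathbf{x}_t|\gD^{\tgt}) = p(\gD^{\tgt}|\mathbf{x}_t)\,p(\mathbf{x}_t)/p(\gD^{\tgt})$, taking $\log$ and then $\nabla_{\mathbf{x}_t}$ kills the $\mathbf{x}_t$-independent term $\log p(\gD^{\tgt})$, yielding
\begin{equation*}
\nabla_{\mathbf{x}_t}\log p(\mathbf{x}_t|\gD^{\tgt}) - \nabla_{\mathbf{x}_t}\log p(\mathbf{x}_t) = \nabla_{\mathbf{x}_t}\log p(\gD^{\tgt}|\mathbf{x}_t).
\end{equation*}
Substituting this identity back into the subtracted expression and moving $\nabla_{\mathbf{x}_t} \log p_w^{\cfg}$ to the right-hand side completes the derivation.

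\textbf{Obstacles.} There is essentially no analytic difficulty here: the argument is algebraic rearrangement plus one Bayes inversion, and it parallels the well-known derivation by \citet{diff-beat-dhariwal2021diffusion} showing that classifier-free guidance is equivalent to classifier guidance with an implicit classifier. The only points worth being careful about are notational, namely keeping the conditioning event $\gD^{\tgt}$ distinct from the label $c$ throughout, and verifying that the dropping of $\log p(\gD^{\tgt})$ under $\nabla_{\mathbf{x}_t}$ is justified because the marginal $p(\gD^{\tgt}) = \int p(\gD^{\tgt}|\mathbf{x}_t)\,p(\mathbf{x}_t)\,d\mathbf{x}_t$ does not depend on the evaluation point $\mathbf{x}_t$. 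After these observations the proof closes in a few lines.
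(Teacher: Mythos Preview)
Your proposal is correct and matches the paper's proof essentially line for line: subtract the two definitions to obtain $(w-1)\bigl[\nabla_{\mathbf{x}_t}\log p(\mathbf{x}_t|\gD^{\tgt}) - \nabla_{\mathbf{x}_t}\log p(\mathbf{x}_t)\bigr]$, then invoke Bayes' rule to identify this bracket as $\nabla_{\mathbf{x}_t}\log p(\gD^{\tgt}|\mathbf{x}_t)$. The paper's argument is no more and no less than this.
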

\begin{proof}
Since 
\begin{align*}
    \nabla_{\mathbf{x}_t} \log p_w^{\dog}({\mathbf{x}_t}|c,\gD^{\tgt}) - \nabla_{\mathbf{x}_t} \log p_w^{\cfg}({\mathbf{x}_t}|c,\gD^{\tgt}) = (w-1) \bracket{\nabla_{\mathbf{x}_t} \log p({\mathbf{x}_t}|\gD^{\tgt})-\nabla_{\mathbf{x}_t} \log p({\mathbf{x}_t})}
\end{align*}
Using Bayes' rule, we have:
\begin{align*}
    \frac{p({\mathbf{x}_t}|\gD^{\tgt})}{p({\mathbf{x}_t})} \propto{p(\gD^{\tgt}|{\mathbf{x}_t})}
\end{align*}
    Thus we have:
    \begin{align*}
        \nabla_{\mathbf{x}_t} \log p({\mathbf{x}_t}|\gD^{\tgt})-\nabla_{\mathbf{x}_t} \log p({\mathbf{x}_t}) = \nabla_{\mathbf{x}_t} \log p(\gD^{\tgt}|{\mathbf{x}_t})
    \end{align*}
\end{proof}
\begin{proof}[Proof of Theorem \ref{thm:convergence}]
We denote $\hat{p}_t({\mathbf{x}}_t) = \sum_{i=1}^N\frac{1}{N}q({\mathbf{x}}_t|{\mathbf{x}}_0={\bm{y}}_i)$ to be the marginal distribution at time $t$ conditioning on the dataset samples $\gD=\sets{{\bm{y}}_i}_{i=1}^N,~ \bm{y}_i\sim p(\bm{y})$ and $p^*_t({\mathbf{x}}_t)=\int_{\bm{y}} p({\bm{y}})q({\mathbf{x}}_t|{\mathbf{x}}_0={\bm{y}})$ is the marginal distribution of the ground truth. Then we have:
    \begin{align*}
        & \EE_{\gD\sim p(\gD)}\mbracket{\abs{p^*_t({\mathbf{x}}_t)-\hat{p}_t({\mathbf{x}}_t)}} \\
        \leq &  \sqrt{\EE_{\gD\sim p(\gD)}\mbracket{\bracket{p^*_t({\mathbf{x}}_t)-\hat{p}_t({\mathbf{x}}_t)}^2}}
    \end{align*}
    For a dataset $\gD=\sets{{\bm{y}}_i}_{i=1}^N,~ \bm{y}_i\sim p(\bm{y})$, we have $p(\gD)=\prod_{i=1}^N p(\bm{y}_i)$.
    Thus we have:
    \begin{align}
        &\EE_{\gD\sim p(\gD)}\mbracket{\bracket{p^*_t({\mathbf{x}}_t)-\hat{p}_t({\mathbf{x}}_t)}^2}\nonumber\\
        &=\int_{\sets{\bm{y}_i}_{i=1}^N}\prod_{i=1}^N p(\bm{y}_i)\bracket{\sum_{i=1}^N\frac{1}{N}q({\mathbf{x}}_t|{\mathbf{x}}_0={\bm{y}}_i)-p^*_t(\mathbf{x}_t)}^2 \nonumber\\
        &=\int_{\sets{\bm{y}_i}_{i=1}^N}\prod_{i=1}^N p(\bm{y}_i) \bracket{\sum_{i=1}^N \frac{1}{N}\bracket{q({\mathbf{x}}_t|{\mathbf{x}}_0={\bm{y}}_i)-p^*_t(\mathbf{x}_t)}}^2\nonumber\\
        &=\sum_{i=1}^N \int_{\bm{y}_i}p(\bm{y}_i)\frac{1}{N^2}\bracket{q({\mathbf{x}}_t|{\mathbf{x}}_0={\bm{y}}_i)-p^*_t(\mathbf{x}_t)}^2\nonumber\\
        & ~~~~+ \sum_{i\neq j}\int_{\bm{y}_i,\bm{y}_j}p(\bm{y}_i)p(\bm{y}_j)\frac{1}{N^2}\bracket{q({\mathbf{x}}_t|{\mathbf{x}}_0={\bm{y}}_i)-p^*_t(\mathbf{x}_t)}\bracket{q({\mathbf{x}}_t|{\mathbf{x}}_0={\bm{y}}_j)-p^*_t(\mathbf{x}_t)}\nonumber\\
        &=\frac{1}{{N^2}}\sum_{i=1}^N\int_{\bm{y}_i}p(\bm{y}_i)\bracket{q({\mathbf{x}}_t|{\mathbf{x}}_0={\bm{y}}_i)-p^*_t(\mathbf{x}_t)}^2\label{eq:variance}\\
        &\leq \frac{1}{{N}}\label{eq:constant}\,,
    \end{align}
    where Eq \ref{eq:variance} is because $\int_{\bm{y}_i}p(\bm{y}_i)\bracket{q({\mathbf{x}}_t|{\mathbf{x}}_0={\bm{y}}_i)-p^*_t(\mathbf{x}_t)}=0$, and Eq \ref{eq:constant} is because $\int_{\bm{y}_i}p(\bm{y}_i)\bracket{q({\mathbf{x}}_t|{\mathbf{x}}_0={\bm{y}}_i)-p^*_t(\mathbf{x}_t)}^2\leq 1$. As a result:
    \begin{align*}
        \EE_{\gD\sim p(\gD)}\mbracket{\abs{p^*_t({\mathbf{x}}_t)-\hat{p}_t({\mathbf{x}}_t)}}\leq \frac{1}{\sqrt{N}}\,.
    \end{align*}
\end{proof}

\section{Details of the 2D Toy Example}
\label{app:2d-toy}
We randomly generated 100 Gaussians $\cM_s=\sets{\phi_i,\mu_i,\Sigma_i}$ as the pre-train data distribution and generated 5 Gaussians $\cM_t$ in a selected area as the distribution of the target domain. We divide the Gaussians into two classes $\cM_{c1}$ and $\cM_{c2}$ each occupying a selected area as the two class conditions. Given above, we can write the data density as:
\begin{align*}
    &\text{Source density   }p_{s}(x) = \sum_{i\in \cM_s} \phi_i\cN(x|\mu_i,\mathbf{\Sigma}_i)\,, \\
    &\text{target density   }p_{t}(x) =  \sum_{i\in \cM_t} \phi_i\cN(x|\mu_i,\mathbf{\Sigma}_i), \\
   & \text{class-conditional target density     } p_t(x|c) = \sum_{i\in \cM_c} \phi_i\cN(x|\mu_i,\mathbf{\Sigma}_i).
\end{align*}
where the multivariate Gaussian distribution is defined as:
\begin{align*}
    \cN(x|\mu_i,\mathbf{\Sigma}_i) = \frac{1}{\sqrt{(2\pi)^2\det(\mathbf{\Sigma})}}\exp\bracket{-\frac{1}{2}(x-\mu)^\top \mathbf{\Sigma}^{-1}(x-\mu)}
\end{align*}

We implement the denoising network as a 4-layer fully connected ReLU network with hidden feature dimension 64. We use sinusoidal positional embeddings for time conditioning as in \citep{DDPMho2020denoising}, and we add the time embedding to every intermediate layer. Unlike traditional CFG where we use a dropout ratio to learn the non-conditinal distribution, here we train a separate unconditional model on $p_t(x)$ and $p_s(x)$ for guidance, similar to \citep{karras2024analyzing}. We parameterize the network to explicitly output the score function. The pre-train model converges after 10000 Adam steps with a batch size of 128 and learning rate 1e-3. For fine-tuning on the target domain, the model converges after 1000 steps. 

For training, we used the DDPM noise schedule from \citet{DDPMho2020denoising}. We used the DDIM sampler \citet{DDIM-song2020denoising} for 20 sampling steps to generate our samples. For all of our experiments, we set the CFG weight and the DoG weight at 2.

\section{Additional Experiment Results}
\label{App:full-results}
Here we provide the additional results for the Precision and Recall metrics \citep{kynkaanniemi2019improved}. Notably, DoG enhances precision without compromising recall, indicating an overall improvement in generation quality.
\begin{table*}[h]
    \setlength{\abovecaptionskip}{-0.01cm}
    \setlength{\belowcaptionskip}{-0.1cm}
    \caption{Precision $\uparrow$ Comparisons on downstream tasks with pre-trained DiT-XL-2-256x256.}
    \vskip 0.1in
    \centering
    \setlength{\tabcolsep}{2.0pt}
    \scalebox{0.95}{
        \begin{tabular}{l|ccccccc|c}
            \toprule
            \diagbox{Method}{Dataset} & \makecell[c]{Food} & \makecell[c]{SUN} & \makecell[c]{Caltech} & \makecell[c]{CUB\\Bird} & \makecell[c]{Stanford \\ Car} & \makecell[c]{DF-20M} & \makecell[c]{ArtBench} & \makecell[c]{Average\\Precision} \\
            \midrule
            Fine-tuning (w/o guidance) & 0.376 & 0.583 & 0.536 & 0.143 & 0.331 & 0.502 & 0.821 & 0.470 \\
            + Classifier-free guidance & 0.455 & 0.590 & 0.668 & 0.331 & 0.501 & 0.537 & 0.831 & 0.559 \\
            \midrule
            + \textbf{Domain guidance} & \textbf{{0.533}} & \textbf{{0.601}} & \textbf{{0.715}}& \textbf{{0.431} }& \textbf{{0.631}} & \textbf{{0.708}} & \textbf{{0.901}} & \textbf{{0.646}} \\
            \bottomrule 
        \end{tabular}
    }
    \vspace{0.5cm}
    \label{table: fine-tuning-downstream-datasets-precision}
    \vspace{-0.3cm}
\end{table*}

\begin{table*}[h]
    \setlength{\abovecaptionskip}{-0.01cm}
    \setlength{\belowcaptionskip}{-0.1cm}
    \caption{Recall $\uparrow$ Comparisons on downstream tasks with pre-trained DiT-XL-2-256x256.}
    \vskip 0.1in
    \centering
    \setlength{\tabcolsep}{2.0pt}
    \scalebox{0.95}{
        \begin{tabular}{l|ccccccc|c}
            \toprule
            \diagbox{Method}{Dataset} & \makecell[c]{Food} & \makecell[c]{SUN} & \makecell[c]{Caltech} & \makecell[c]{CUB\\Bird} & \makecell[c]{Stanford \\ Car} & \makecell[c]{DF-20M} & \makecell[c]{ArtBench} & \makecell[c]{Average\\Recall} \\
            \midrule
            Fine-tuning (w/o guidance) & \textbf{{0.652}} & 0.326 & \textbf{{0.650}} & \textbf{{0.960}} & 0.840 & \textbf{0.712} & 0.212 & \textbf{0.621} \\
            + Classifier-free guidance & 0.640 & \textbf{0.370} & 0.548 & 0.890 & 0.840 & 0.711 & \textbf{0.230} & 0.604 \\
            \midrule
            + \textbf{Domain guidance} & {0.651} & \textbf{{0.370}} & {0.546} & {0.860} & {0.840} & {0.638} & \textbf{{0.230}} & {0.590} \\
            
            \bottomrule 
        \end{tabular}
    }
    \vspace{0.5cm}
    \label{table: fine-tuning-downstream-datasets-recall}
    \vspace{-0.3cm}
\end{table*}

We conducted experiments applying DoG to off-the-shelf LoRAs of the SDXL model available in the Huggingface community. The results in Table \ref{tab:app-clipscore} show that DoG can enhance the CLIP Score of the fine-tuned model significantly.
\begin{table}[h]
\centering
\caption{CLIP Score $\uparrow$ Enhance the transfer of the SDXL model with the off-the-shelf LoRAs}
\label{tab:app-clipscore}
\begin{tabular}{l|c|c}
\toprule
CLIP Score $\uparrow$ & Chalkboard Drawing Style $\uparrow$ & Yarn Art Style $\uparrow$ \\
\midrule
Real data                    & 36.02                               & 33.88                     \\
Off-the-shelf LoRAs with CFG & 27.23                               & 34.89                     \\
Off-the-shelf LoRAs with DoG & \textbf{35.24}                      & \textbf{35.03}            \\
\bottomrule
\end{tabular}
\end{table}

\begin{figure}[H]
    \begin{minipage}[t]{0.95\textwidth}
        \centering
               \captionof{table}{FID $\downarrow$ Transferring pre-trained DiT-XL-2-512x512 to Food 512x512 dataset}
               \label{table:app-resolution-512}
               \vspace{-0.2cm}
   \setlength{\abovecaptionskip}{-0.01cm}
    \setlength{\belowcaptionskip}{-0.1cm}
         \scalebox{1}{\begin{tabular}{l|c|c}
        \toprule
        Food (512x512)       & FID $\downarrow$ & Relative Promotion $\uparrow$ \\
        \midrule
        Fine-tuning with CFG & 13.56            & 0.0\%                          \\
        Fine-tuning with DoG & \textbf{11.05}        & \textbf{18.5\%}                     \\
        \bottomrule
        \end{tabular}
        }

        \end{minipage}
\end{figure}

\begin{table}[H]
\centering
\caption{FID $\downarrow$ Comparision with different pairs of pre-trained and guiding models}
\label{app-guide-pair}
\begin{tabular}{l|c|c}
\toprule
CUB 256x256 FID $\downarrow$ & Fine-tune DiT-L/2-300M $\downarrow$ & Fine-tune DiT-XL/2-700M $\downarrow$ \\
\midrule
Standard CFG           & 15.20            & 5.32             \\
DoG with DiT-L-2-300M  & \textbf{6.56}    & 3.85             \\
DoG with DiT-XL-2-700M & 8.80             & \textbf{3.52}    \\
\bottomrule
\end{tabular}
\end{table}

\begin{table}[H]
\centering
\caption{FID $\downarrow$ Incorporate with DiffFit}
\begin{tabular}{l|c}
\toprule
Transfer DiT-XL/2 to CUB 256x256 & FID $\downarrow$ \\
\midrule
DiffFit with CFG & 4.98 \\
DiffFit with DoG & \textbf{3.66} \\
\bottomrule
\end{tabular}
\end{table}

\section{Relations with Autoguidance
}

Autoguidance \citep{karras2024autoguidance} focuses on guiding models from suboptimal outputs to better generations, typically using a less-trained version of the model for guidance, equipped with guidance interval techniques \citep{kynkaanniemi2024applyinginterval}. This method has demonstrated remarkable performance in EDM group methods.

DoG and Autoguidance originate from different contexts: Autoguidance focuses on improving generation within the same domain, whereas DoG is designed to adapt pre-trained models to domains outside their original training context. It is inappropriate to consider the original pre-trained model in DoG as a suboptimal version of the fine-tuned model.

Furthermore, pre-trained models are usually optimized on extensive datasets with distinct training strategies, whereas fine-tuned models are trained on different downstream domains with limited data points. This difference undermines the same degradation hypothesis posited by Autoguidance.

Additionally, the pre-trained model is generally a well-trained model with rich knowledge, while fine-tuning to a small dataset often leads to catastrophic forgetting and poor fitness of the target domain. It is not accurate to state that the pre-trained model is a bad version of the fine-tuned one.

As demonstrated in Table \ref{app-guide-pair}, using a better DiT-XL/2 to guide the model fine-tuned from DiT-L/2 conceptually achieves the transfer gain.

Autoguidance is mainly proved in EDM context, whereas DoG mainly provides evidence in DiT-based model and text2img stable diffusion models. Exploring a general understanding of guidance is indeed an valuable avenue for future work.
We believe it is valuable to formally establish the properties required for a unified unconditional guide model and to develop a general guidance model beneficial to all diffusion tasks.
\newpage

\section{Additional Qualitative Samples with Off-the-shelf LoRAs}
\label{sec:additional-case}
\begin{figure}[H]
    \centering
      \includegraphics[width=1\textwidth]{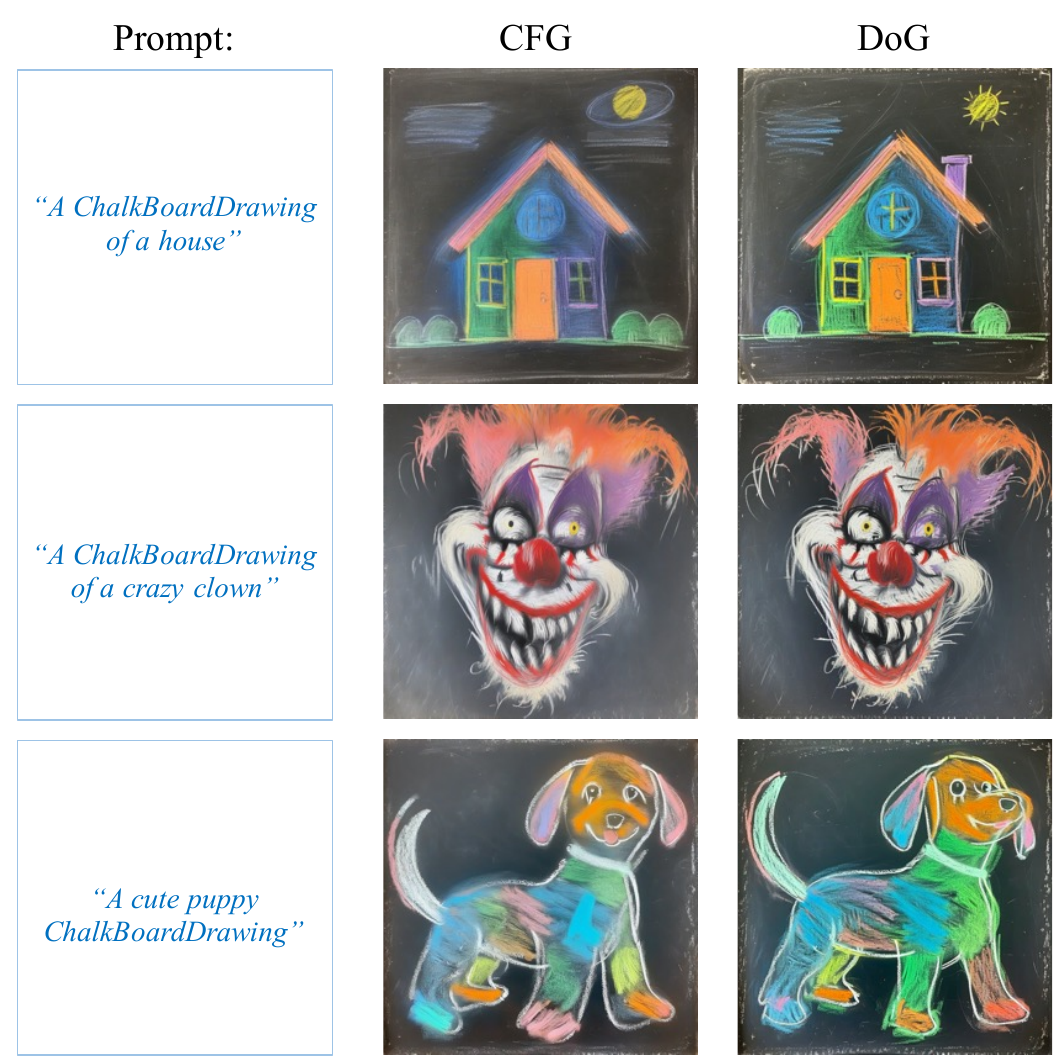}
      
        \caption{Qualitative showcases of the Chalkboard style transfer task, utilizing a default guidance scale of 5.0 for each generation.}
        
        \label{fig:app-chalk}
\end{figure}

\begin{figure}[t]
    \centering
      \includegraphics[width=1\textwidth]{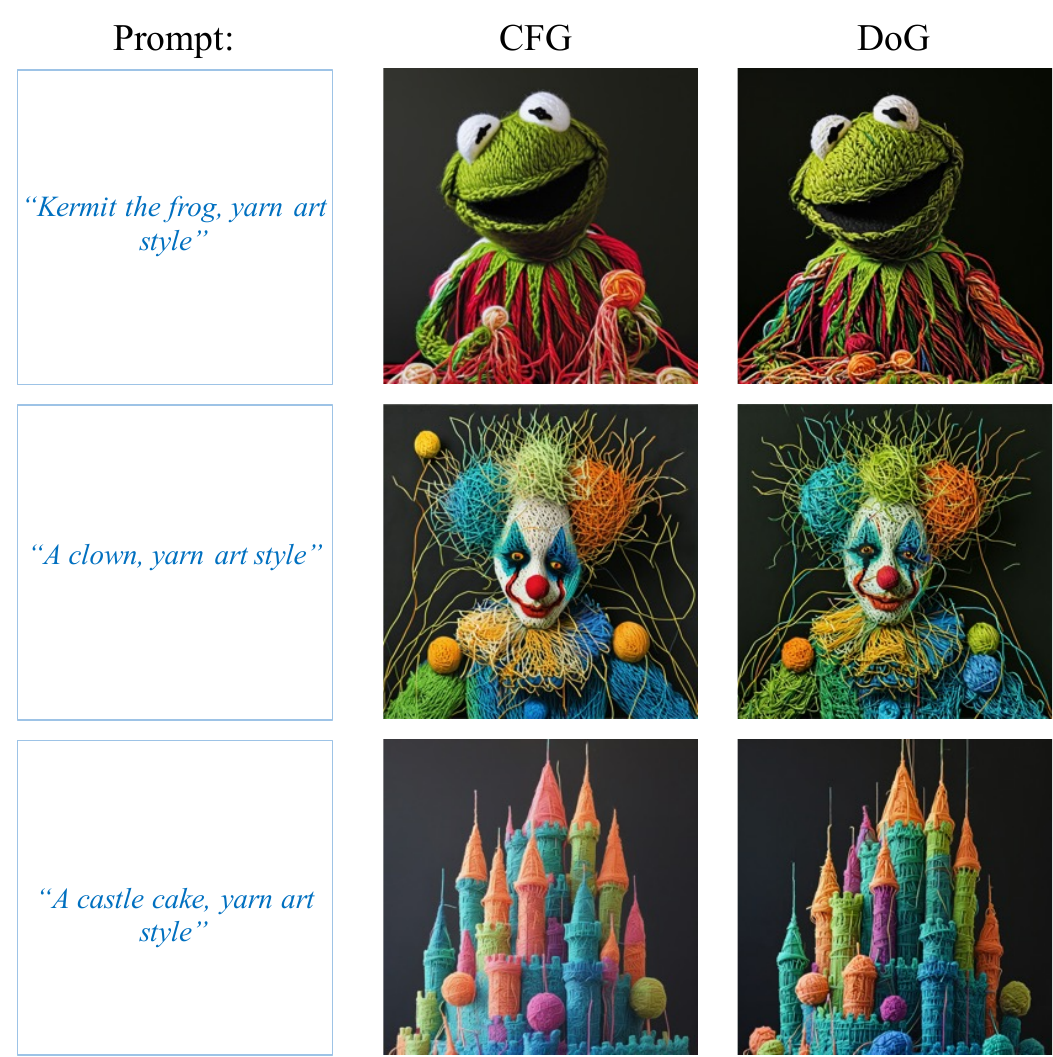}
      
        \caption{Qualitative showcases of the Yarn art style transfer task, utilizing a default guidance scale of 5.0 for each generation.}
        
        \label{fig:app-yarn}
\end{figure}

\end{document}